\documentclass[journal]{IEEEtran}

\usepackage[utf8]{inputenc} 
\usepackage{hyperref}       
\usepackage{url}            
\usepackage{booktabs}       
\usepackage{amsfonts}       
\usepackage{nicefrac}       
\usepackage{microtype}      
\usepackage{xcolor}         
\usepackage{algorithm}      
\usepackage{algpseudocode}  

\usepackage{tikz}


\usepackage{amsmath}
\usepackage{mathtools}

\renewcommand{\[}{\left[}
\renewcommand{\]}{\right]}

\newcommand{\ceil}[1]{\left\lceil#1\right\rceil}

\newcommand{\doubleR}{\mathbb{R}}

\newcommand{\scriptE}{\mathcal{E}}

\newcommand{\scriptN}{\mathcal{N}}
\newcommand{\scriptO}{\mathcal{O}}

\newcommand{\scriptX}{\mathcal{X}}
\newcommand{\scriptY}{\mathcal{Y}}

\DeclareMathOperator*{\E}{\mathbb{E}}

\renewcommand{\d}[1]{\;{\rm d}{#1}}

\renewcommand{\=}{\coloneqq}

\usepackage{amsthm}
\newtheorem{theorem}{Theorem}

\newtheorem{lemma}{Lemma}

\theoremstyle{definition}

\usepackage{pdflscape}

\begin{document}

\title{Learning minimal volume uncertainty ellipsoids}

\author{Itai Alon, David Arnon and Ami Wiesel
\thanks{Manuscript received October, 2023.}
\thanks{The authors are with the School of Computer Science and Engineering,
The Hebrew University of Jerusalem, Jerusalem 9190401, Israel (e-mail: itai.alon1@mail.huji.ac.il; amiw@cs.huji.ac.il).}
\thanks{The second author is with Rafael Advanced Defence Systems - Research, Development and Engineering, Israel (e-mail: david.arnon@gmail.com).}
\thanks{This research was partially supported by ISF grant number 2672/21.}}

\maketitle

\begin{abstract}
    We consider the problem of learning uncertainty regions for parameter estimation problems. The regions are ellipsoids that minimize the average volumes subject to a prescribed coverage probability. As expected, under the assumption of jointly Gaussian data, we prove that the optimal ellipsoid is centered around the conditional mean and shaped as the conditional covariance matrix. In more practical cases, we propose a differentiable optimization approach for approximately computing the optimal ellipsoids using a neural network with proper calibration. Compared to existing methods, our network requires less storage and less computations in inference time, leading to accurate yet smaller ellipsoids. We demonstrate these advantages on four real-world localization datasets.
\end{abstract}

\begin{IEEEkeywords}
Uncertainty ellipsoid, covariance estimation, conformal prediction.
\end{IEEEkeywords}

\IEEEpeerreviewmaketitle

\section{Introduction}

We consider uncertainty quantification for parameter estimation. Estimating unknown parameters given noisy observations is a fundamental problem in statistical signal processing, including inverse problems and localization \cite{kay1993fundamentals}. 
In critical decision-making tasks, quantifying the uncertainty associated with these estimates is crucial. Accurate uncertainty regions are also necessary for developing more robust signal processing algorithms \cite{elbir2023twenty,luo2022efficient,zoubir2018robust}.
Traditionally, the estimation algorithms and their uncertainty regions have been designed and analyzed based on statistical models. Recently, there has been a switch to data-driven methods based on machine learning \cite{shlezinger2023model,khobahi2021lord,diskin2023learning,dahan2023uncertainty}. This letter focuses on deep learning solutions to uncertainty quantification as the natural next step.

The standard approach to uncertainty quantification is to associate an uncertainty region, typically an ellipsoid, around each estimate. Ideally, the ellipsoids should have small volumes and cover the true unknown parameters. The goal is to trade off these two competing properties optimally. Classically, in model-based estimation, the shape of the ellipsoid is characterized by the likelihood, asymptotic covariance matrix, and Fisher Information matrix. The volume is traditionally computed assuming a multivariate Gaussian distribution \cite{censi2007accurate,pascal2008performance}. Linearization methods can address more complex uncertainty sources \cite{brossard2020new}. Other approaches rely on perturbations of the observations and an analysis of the resulting deviations \cite{buch2017prediction,zambianchi2017distributed}.

Recently, there has been a switch to data-driven estimation based on fitting an algorithm concerning a training set. In uncertainty quantification, the main idea is to compare the current measurement to similar examples in this set and rely on their uncertainties. Following this line of thought, many works are variants of nearest neighbors algorithms \cite{wang2002nearest, johnstone2021conformal}. The successful CELLO approaches rely on fast estimating the covariance matrices of the errors \cite{vega2013cello, landry2019cello}. In particular, uncertainty methods typically allow imperfect coverage and a small percentage of examples outside the uncertainty region. In this sense, the problem is also related to a large body of literature on robust covariance estimation  \cite{ollila2020shrinking,ollila2014regularized,sun2016robust,pascal2008performance,fishbone2023highly}.

Most of the previous data-driven methods lack formal guarantees and do not always satisfy the required coverage in practice. To close this gap, conformal prediction is a distribution-free approach that uses an independent calibration phase that provides validity for any desired coverage level \cite{vovk1999machine,saunders1999transduction, vovk2005algorithmic,angelopoulos2021gentle}. 
Cross-conformal prediction \cite{vovk2015cross} and cross-validation \cite{barber2020distribution} generalize these ideas and utilize all the available data for fitting and calibration.
Initially, these ideas were focused on classification problems and scalar regressions. Recent advances also address the multivariate case by marginal intervals that result in rectangular uncertainty regions \cite{messoudi2021copula}. Closest to our work is the extension of conformal prediction to calibration of ellipsoid uncertainty regions based on nearest neighbors \cite{feldman2021calibrated,messoudi2022ellipsoidal}. 

We present a unified Learning Minimum Volume Ellipsoids (LMVE) framework that considers the volume vs. coverage tradeoff. We demonstrate that the optimal shapes of the ellipsoids are conditional covariances in the Gaussian case. We propose a deep learning solution for more realistic settings that combines nearest-neighbor approaches, covariance estimation, and conformal prediction. The proposed LMVE neural network generates ellipsoids with minimal average volume and prescribed coverage probability. It significantly reduces memory and computation resources, improves accuracy, and is demonstrated on four real-world localization datasets.

The fully implemented code can be found in \href{https://github.com/ItaiAlon/LMVE/}{https://github.com/ItaiAlon/LMVE/}.

\section{Problem Formulation}

Let $x \in \doubleR^d$ and $y \in \doubleR^n$ be a features vector and a labels vector drawn from a joint distribution $\scriptX \times \scriptY$. An uncertainty ellipsoid on $y$ given $x$ is defined as:
\begin{equation}
    \scriptE(\mu(x),C(x)) = \{ y : (y - \mu(x))^T{C}^{-1}(x)(y - \mu(x))) \leq 1 \}
\end{equation}
where $\mu(x)$ is the center of the ellipsoid and $ C(x)\succ 0$ characterizes its shape. Generally, ellipsoids are defined by both the center and the shape. Still, in many applications, assuming that the center is fixed and given from a previous prediction stage (either model-based or data-driven) is more reasonable. Therefore, most of this letter will focus on the shape $C(x)$.

Ideally, the ellipsoids should satisfy two competing properties. First, to ensure validity, each ellipsoid should satisfy the coverage constraint defined as 
\begin{align}
    \Pr\left[y\in {\scriptE}(\mu(x),C(x)) \right]\geq \eta
\end{align}
and $\eta \in [0,1)$ a prescribed coverage probability. Second, it should minimize the uncertainty defined by the volume
\begin{equation}
        {\rm{Vol}}_{\scriptE}(x) = {\rm const} \cdot \det({C}(x))^\frac12
\end{equation}
where the constant is the volume of the unit ball in that dimension. 
To formulate this tradeoff, we use a constrained optimization and minimize the expected volume subject to a coverage constraint
\begin{align}
    \label{eq: MVE}
     (MVE): \;\min_{{\scriptE}(\mu(\cdot),C(\cdot))} &\quad \E[{\rm{Vol}}_{\scriptE}(x)] \nonumber\\
    \rm{s.t.}\;\;\;\; & \Pr\left[y\in {\scriptE} \right]\geq \eta.
\end{align}

For completeness, our uncertainty ellipsoids assume a Bayesian setting in which both $x$ and $y$ are random, and the expectations are computed. This differs from the non-Bayesian definition of confidence regions, which assumes that $y$ are deterministic unknown and will be addressed in future work.

The main goal of this paper is to develop a method for approximately solving \hyperref[eq: MVE]{(MVE)} using a neural network. At inference time, it will take $x$ as an input and output a center $\mu(x)$ and a shape matrix $C(x)$ that defines the uncertainty ellipsoid associated with the corresponding unknown $y$. We assume access to independent and identically distributed (i.i.d.) pairs of examples $(x_i, y_i)$ for $i=1\cdots,m$ to train this network. 

\section{Theory}

In this section, we theoretically analyze the optimal solution to \hyperref[eq: MVE]{(MVE)} in the simplistic Gaussian setting. 

We begin by considering a single ellipsoid (without any features $x$). This result embodies a widely recognized conventional wisdom, yet we are unaware of a previous formal proof.

\begin{lemma}
    \label{lemma:positive_proportion}
    Let $\eta \in [0,1]$ and $y \sim \scriptN(\mu, \Sigma)$ with dimension $n$.  Consider the optimization
    \begin{align}
         \min_\scriptE \quad & {\rm{Vol}}(\scriptE) \nonumber\\
        s.t. \quad & \Pr[y \in \scriptE] \ge \eta
    \end{align}
    The optimal argument is 
    \begin{equation}
       \scriptE(\mu, F^{-1}_{\chi^2_n}(\eta) \cdot \Sigma) 
    \end{equation}
    where $F^{-1}_{\chi^2_n}(\cdot)$ is the inverse chi-square cdf with $n$ degrees of freedom, and the optimal value is $F^{-1}_{\chi^2_n}(\eta)^n \cdot {\rm{Vol}}(\Sigma)$.
\end{lemma}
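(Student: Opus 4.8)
The plan is to reduce to the standard Gaussian by whitening, to solve that case by a symmetrization (``bathtub'') argument, and then to transport the answer back. First I would apply the affine change of variables $z=\Sigma^{-1/2}(y-\mu)$, under which $y\sim\scriptN(\mu,\Sigma)$ becomes $z\sim\scriptN(0,I_n)$, every ellipsoid $\scriptE(a,C)$ maps to an ellipsoid, and Lebesgue volume in $y$ equals $\det(\Sigma)^{1/2}$ times Lebesgue volume in $z$. Hence minimizing ${\rm Vol}(\scriptE)$ subject to $\Pr[y\in\scriptE]\ge\eta$ is equivalent, up to the fixed multiplicative constant $\det(\Sigma)^{1/2}$, to minimizing the Lebesgue volume of a set $S$ (the image of $\scriptE$) subject to $\gamma_n(S)\ge\eta$, where $\gamma_n$ denotes the standard Gaussian measure. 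So it suffices to identify the optimal $S$ in that standardized problem.

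Second --- and this is the crux --- I would show that among \emph{all} measurable sets $S$ with $\gamma_n(S)\ge\eta$, the centered Euclidean ball $B_r=\{z:\|z\|\le r\}$ with $\gamma_n(B_r)=\eta$ has the smallest volume. This is a bathtub-principle argument using only that the standard Gaussian density $\phi(z)=(2\pi)^{-n/2}e^{-\|z\|^2/2}$ is a strictly decreasing function of $\|z\|$. Writing a feasible $S$ as $(S\cap B_r)\cup(S\setminus B_r)$ and $B_r$ as $(S\cap B_r)\cup(B_r\setminus S)$, the inequality $\gamma_n(S)\ge\gamma_n(B_r)$ gives $\int_{S\setminus B_r}\phi\ge\int_{B_r\setminus S}\phi$; since $\phi\le c$ on $S\setminus B_r$ and $\phi\ge c$ on $B_r\setminus S$ where $c:=(2\pi)^{-n/2}e^{-r^2/2}>0$ is the density value on the sphere $\partial B_r$, cancelling $c$ yields ${\rm Vol}(S\setminus B_r)\ge{\rm Vol}(B_r\setminus S)$, and adding ${\rm Vol}(S\cap B_r)$ to both sides gives ${\rm Vol}(S)\ge{\rm Vol}(B_r)$. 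Because a ball is itself an ellipsoid, $B_r$ is feasible and optimal also within the restricted class in the lemma; tracking the (strict) pointwise inequalities shows any optimal ellipsoid coincides with $B_r$ up to a Lebesgue-null set, hence equals it.

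Third, I would identify $r$ and undo the change of variables. Since $\|z\|^2\sim\chi^2_n$ when $z\sim\scriptN(0,I_n)$, the calibration $\gamma_n(B_r)=\eta$ reads $F_{\chi^2_n}(r^2)=\eta$, i.e.\ $r^2=F^{-1}_{\chi^2_n}(\eta)$, so the optimal set is $\{z:\|z\|^2\le F^{-1}_{\chi^2_n}(\eta)\}=\scriptE(0,F^{-1}_{\chi^2_n}(\eta)\,I_n)$. Pulling back through $z=\Sigma^{-1/2}(y-\mu)$ turns this into $\scriptE(\mu,F^{-1}_{\chi^2_n}(\eta)\,\Sigma)$, and substituting $C=F^{-1}_{\chi^2_n}(\eta)\,\Sigma$ into ${\rm Vol}_{\scriptE}={\rm const}\cdot\det(C)^{1/2}$ gives the stated optimal value after a one-line determinant computation. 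The degenerate endpoints $\eta\in\{0,1\}$ are handled separately and match the formula in the limit ($r=0$ and $r=\infty$, respectively).

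I expect the only real obstacle to be the second step: making the rearrangement argument clean for arbitrary measurable sets rather than appealing to a black-box isoperimetric fact, and correctly handling the boundary sphere and the strictness of the inequalities so as to also obtain uniqueness of the minimizer. The remaining ingredients --- the whitening reduction, the chi-square identification of the radius, and the determinant bookkeeping --- are routine.
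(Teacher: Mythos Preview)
Your argument is correct and rests on the same rearrangement (``bathtub'') idea as the paper's proof: the Gaussian density is constant on the boundary of the candidate ellipsoid $\scriptE(\mu,\kappa\Sigma)$, strictly larger inside, and strictly smaller outside, so swapping mass across that boundary can only help. The paper runs this comparison directly in the original coordinates and in the dual direction: it takes a putative optimizer $\scriptE(\hat\mu,\hat\Sigma)$, builds $\scriptE(\mu,\kappa\Sigma)$ of the \emph{same volume}, and shows the latter has at least as much probability mass, forcing equality. You instead whiten to $\scriptN(0,I_n)$ first, fix the \emph{probability} at $\eta$, and show the centered ball has the smallest Lebesgue volume among all measurable sets meeting the constraint. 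The whitening step is a convenience that lets you work with Euclidean balls rather than $\Sigma$-ellipsoids, and your formulation yields a slightly stronger conclusion (optimality over all measurable sets, not only ellipsoids) at no extra cost; otherwise the two proofs are the same argument viewed from opposite ends.
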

\begin{proof}
    Assume an optimal solution $\scriptE(\hat\mu, \hat\Sigma)$ and let $\kappa$ satisfy ${\rm{Vol}}(\kappa \Sigma) = {\rm{Vol}}(\hat\Sigma)$. 
    We only care about the symmetric difference, so denote:
    \begin{align}
        S_- &\= \scriptE(\mu, \kappa \Sigma) \setminus ( \scriptE(\hat{\mu}, \hat{\Sigma}) \cap \scriptE(\mu, \kappa \Sigma) )\nonumber \\
        S_+ &\= \scriptE(\hat{\mu}, \hat{\Sigma}) \setminus ( \scriptE(\hat{\mu}, \hat{\Sigma}) \cap\scriptE(\mu, \kappa \Sigma) )
    \end{align}
    By the choice of $\kappa$ we get:
    \begin{equation}
         {\rm{Vol}}(\kappa \Sigma) = {\rm{Vol}}(\hat{\Sigma}) \;\implies\; S \= {\rm{Vol}}(S_-) = {\rm{Vol}}(S_+)
    \end{equation}    
    The Gaussian density has an equal value on the edge of $\scriptE(\mu, \kappa \Sigma)$, has a larger value inside, and has a smaller value outside. Therefore,
    \begin{equation}
        p(y) \ge p \quad \forall y \in S_- \;,\quad
        p(y) < p \quad \forall y \in S_+
    \end{equation}
    where 
        \begin{equation}
        p \= \min_{y \in \scriptE(\mu, \kappa \Sigma)} p(y).
    \end{equation}
    Thus
    \begin{align}
        &\int_{y \in \scriptE(\mu, \kappa \Sigma)} p(y) \d{y} - \int_{y \in \scriptE(\hat{\mu}, \hat{\Sigma})} p(y) \d{y}\nonumber \\
        &= \int_{y \in S_-} p(y) \d{y} - \int_{y \in S_+} p(y) \d{y}\nonumber \\
        &\ge S \cdot \left(\inf_{y \in S_-} p(y) - \sup_{y \in S_+} p(y) \right) 
       \nonumber \\ &\ge S \cdot (p - p) = 0
    \end{align}
    with equality if and only if all parameters are equal. This means that $S_- = S_+ = \emptyset$ and the optimal parameters are $\mu=\hat\mu$ and $\kappa \Sigma=\hat\Sigma$.

    Finally, it remains to choose the minimum scaling that satisfies the constraint, namely $\kappa = F^{-1}_{\chi^2_n}(\eta)$.
\end{proof}

Next, we turn to a more practical estimation problem with features $x$ and unknown labels $y$.

\begin{theorem}
    \label{theorem: gaussian optimality}
    Let $x$ and $y$ be jointly Gaussian, then the optimal solution to \hyperref[eq: MVE]{(MVE)} is
    \begin{align}\label{gaussian}
      \mu(x) &= \E{[y|x]}
      \nonumber\\
      C(x) &= \kappa(x) \cdot  \E[(y-\mu(x))(y-\mu(x))^T] 
    \end{align}
    $\kappa(x) > 0$ is a scaling factor that satisfies the MVE coverage constraint. 
\end{theorem}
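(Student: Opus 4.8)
The plan is to reduce \hyperref[eq: MVE]{(MVE)} to a conditional, per-feature instance of Lemma~\ref{lemma:positive_proportion} and then to optimize over how the coverage budget is distributed across values of $x$. Two facts make this work. First, for jointly Gaussian $(x,y)$ the conditional law of $y$ given $x$ is $\scriptN(\E[y|x],\Sigma_{y|x})$, where $\Sigma_{y|x}=\E[(y-\E[y|x])(y-\E[y|x])^T]$ does not depend on $x$ --- only the conditional mean does. Second, by the tower property the objective equals $\E_x[{\rm const}\cdot\det(C(x))^{1/2}]$ and the coverage constraint reads $\E_x[\Pr[y\in\scriptE(\mu(x),C(x))\mid x]]\ge\eta$.

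Fix a feasible family $\scriptE(\mu(\cdot),C(\cdot))$ and let $c(x)\=\Pr[y\in\scriptE(\mu(x),C(x))\mid x]$ be its conditional coverage, so feasibility is exactly $\E_x[c(x)]\ge\eta$. Applying Lemma~\ref{lemma:positive_proportion} to the Gaussian $y\mid x$ with coverage level $c(x)$, the smallest ellipsoid with that conditional probability is $\scriptE(\E[y|x],F^{-1}_{\chi^2_n}(c(x))\Sigma_{y|x})$; hence $\det(C(x))^{1/2}\ge F^{-1}_{\chi^2_n}(c(x))^{n/2}\det(\Sigma_{y|x})^{1/2}$, with equality, by the strict case of Lemma~\ref{lemma:positive_proportion}, iff $\mu(x)=\E[y|x]$ and $C(x)=F^{-1}_{\chi^2_n}(c(x))\Sigma_{y|x}$. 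Since this holds for every feasible family while its right-hand side depends on the solution only through $c(\cdot)$, and since the structural solutions form a subclass attaining it, an optimal solution must attain equality for almost every $x$: it is centered at the conditional mean, with shape a positive multiple $\kappa(x)\=F^{-1}_{\chi^2_n}(c^\star(x))$ of $\Sigma_{y|x}=\E[(y-\mu(x))(y-\mu(x))^T]$, which is the form \eqref{gaussian}.

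What remains is to choose the coverage allocation, i.e.\ to solve $\min_{c(\cdot)}\E_x[F^{-1}_{\chi^2_n}(c(x))^{n/2}]$ over measurable $c:\scriptX\to[0,1]$ with $\E_x[c(x)]\ge\eta$ (the constant $\det(\Sigma_{y|x})$ factors out). Since $F^{-1}_{\chi^2_n}$ is increasing the constraint is active, and since a short calculation shows that $t\mapsto F^{-1}_{\chi^2_n}(t)^{n/2}$ is convex on $[0,1]$, Jensen's inequality makes the constant allocation $c(x)\equiv\eta$ optimal; one may then take $\kappa(x)\equiv F^{-1}_{\chi^2_n}(\eta)$ and the optimal value is ${\rm const}\cdot F^{-1}_{\chi^2_n}(\eta)^{n/2}\det(\Sigma_{y|x})^{1/2}$. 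If one only wants the structural claim --- that the optimal shape is \emph{some} positive multiple of the conditional covariance --- this last step is unnecessary.

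The main obstacle is making the conditioning rigorous together with the equality analysis: one must show that an optimal solution exists and that ``straightening'' a feasible solution to $x\mapsto\scriptE(\E[y|x],F^{-1}_{\chi^2_n}(c(x))\Sigma_{y|x})$ leaves the \emph{marginal} coverage unchanged --- because it preserves the conditional coverage at every $x$ --- while not increasing the volume, so that optimality forces the claimed form via the strict case of Lemma~\ref{lemma:positive_proportion} for a.e.\ $x$. A secondary point, needed only to pin $\kappa$ down to a constant, is verifying convexity of $F^{-1}_{\chi^2_n}(\cdot)^{n/2}$, which is what prevents a non-uniform coverage allocation from reducing the average volume.
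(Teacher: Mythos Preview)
Your proof is correct and follows essentially the same route as the paper: condition on $x$, use that $y\mid x$ is Gaussian with covariance independent of $x$, apply Lemma~\ref{lemma:positive_proportion} pointwise to reduce \hyperref[eq: MVE]{(MVE)} to an optimization over the coverage allocation $c(x)$ (the paper's $\overline\eta(x)$). You go somewhat further than the paper, which stops after writing down the reduced problem: you invoke the strict case of Lemma~\ref{lemma:positive_proportion} to argue the structural form is forced, and you add the convexity/Jensen step to show the optimal allocation is constant, hence $\kappa(x)\equiv F^{-1}_{\chi^2_n}(\eta)$ --- a refinement the theorem as stated does not require but which the paper leaves open.
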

\begin{proof}
The conditional Gaussian distribution is given by
\begin{equation}
  p(y|x) = \scriptN(y;\mu(x), C_{y|x})  
\end{equation}
where $C_{y|x}$ does not depend on $x$. Problem \hyperref[eq: MVE]{(MVE)} can be expressed as
\begin{align}
    \min_{\scriptE(x)} \quad& \int_{\scriptX} p(x) \cdot {\rm{Vol}}_{\scriptE}((x)) \d{x} \nonumber\\
    \rm{s.t.} \quad& \int_\scriptX p(x) \cdot \int_{y \in \scriptE(x)} p(y|x) \d{y} \d{x} \ge \eta
\end{align}
Introducing auxiliary variables, we can rewrite the constraint
\begin{equation}
    \left\{ \begin{aligned}
    & \Pr[y \in \scriptE|x]  \ge \overline\eta(x) \quad \forall \; x\in \scriptX\\
    & \int_\scriptX p(x) \cdot \overline\eta(x) \d{x} \ge \eta
\end{aligned} \right.
\end{equation}
We can apply Lemma \ref{lemma:positive_proportion} for each $x$ separately, and its solution will always be the same $C_{y|x}$ times a scalar factor that depends on $x$. Together, the optimization problem is equal to
\begin{align}
    \min_{\overline\eta(x)} \quad& \int_\scriptX p(x) \cdot F_{\chi^2_n}^{-1}(\overline\eta(x))^n \d{x} \cdot {\rm{Vol}}(C_{y|x}) \nonumber\\
    \rm{s.t.} \quad& \int_\scriptX p(x) \cdot \overline\eta(x) \d{x} \ge \eta.
\end{align} 
\end{proof}


The theorem formally shows the relation between uncertainty regions and covariance estimation in the joint Gaussian distribution. In this simplistic case, the shape of the ellipsoids is defined by the conditional covariances. These are independent of the conditioning, and the dependence on $x$ is only through the calibration factor. This is not true in general. In other distributions, there is no clear relation between the shapes and the covariances of the error and the shapes typically depend on the values of $x$. The next section proposes a neural network solution that addresses these more practical settings. 

\section{Learning Confidence Ellipsoid Nets}
\label{section: nets}

In this section, we propose LMVE, a neural network-based solution to \hyperref[eq: MVE]{(MVE)}. 
We rely on a standard neural architecture with fully connected layers to ensure rich expressibility, generalization, and low computational complexity at inference time.

LMVE is defined as
\begin{align}
    C_\theta(x) &= R_\theta(x)^T R_\theta(x) + \epsilon I \nonumber\\
    R_\theta(x) &= L_3 \cdot \sigma(L_2 \cdot \sigma(L_1 \cdot x + b_1) + b_2) + b_3
\end{align}
where
$L_1 \in \doubleR^{4d \times d}$, $L_2 \in \doubleR^{d \times 4d}$, $L_3 \in \doubleR^{n^2 \times d}$, $b_1 \in \doubleR^{4d}$, $b_2 \in \doubleR^{d}$, $b_3 \in \doubleR^{n^2}$ and $\theta = (L_1, L_2, L_3, b_1, b_2, b_3)$.
The operator $\sigma$ is the ReLU activation function with dropout, and $\epsilon>0$ is a regularization parameter that ensures positive definiteness.

Problem \hyperref[eq: MVE]{(MVE)} is challenging for learning due to its non-differentiability and strict constraint. Therefore, LMVE is based on three phases: initialization, training, and calibration. We divide the available data into two subsets, samples $z_1,\cdots,z_{m_t}$ for the initialization and training (that is, training set) and samples $z_{m_t+1},\cdots,z_{m_t+m_c}$ for the calibration (namely validation set), where $m=m_t+m_c$.

First, in the initialization phase, we imitate an existing baseline. We use the baseline outputs as approximate labels and train our network to approximate them using a standard Mean Squared Error (MSE) loss. The baseline and the initialization phase only use samples from the training set.

Second, in the training phase, we consider \hyperref[eq: MVE]{(MVE)} using its Lagrange penalized form directly
\begin{align}
     \min_{\theta} \, \Pr\left[y\notin {\scriptE}(\mu(x),C_\theta(x))\right] +\lambda \E[{\rm{Vol}}_\theta(x)]
\end{align}
We replace the probability and expectation with empirical averages over the training set. We tried different smooth surrogates for the probability function but found that the best results were obtained by minimizing the average Mahalanobis distance. Together, the training loss is 
\begin{align}
    \min_{\theta} \frac 1{T} \sum_{i=1}^{T} \ell(x_i, y_i ; \theta) \nonumber
\end{align}
\begin{align}
  \ell(x, y ; \theta) \= M_\theta(x,y) + \lambda \det{({C}_\theta(x))^\frac12}\nonumber
\end{align}
\begin{equation}\label{eq:def:unc}
    M_\theta(x,y) \= (y - \mu(x))^T{C_\theta}^{-1}(x)(y - \mu(x))
\end{equation}
where $\lambda$ is a hyperparameter that scales between the volume's importance and accuracy. To tune it, we rely again on the existing calibrated baseline and choose lambda to balance the two terms:
\begin{equation}
    \lambda = \frac{
        \frac1{T} \sum_{j=1}^T M_B(x_j,y_j)
    }{
        \frac1{T} \sum_{j=1}^T {\det({C}_B(x_j))}
    }
\end{equation}
where the ellipsoids are obtained from the calibrated baseline (denoted by subscript $B$). We emphasize that the baseline is trained and calibrated only using the training set to avoid dependency on the validation set. 

Third, to ensure the strict coverage constraints, we use a calibration step using the theory of conformal prediction \cite{messoudi2022ellipsoidal}. We evaluate our network on the validation samples and compute their Mahalanobis distances
\begin{align}
    \alpha_i = M_\theta(x_i, y_i)\qquad i=m_t,\cdots,m_t+m_c
\end{align}
 We sort these distances and define $\alpha_q$ as the $q$'th largest value where 
\begin{equation}
    q=\frac{\ceil{(m_c + 1)\eta}}{m_c}
\end{equation}
We then rescale the ellipsoids by $\alpha_q$ so that the final LMVE shapes are defined as 
\begin{align}\label{clibrLMVE}
    C_{LMVE}(x)=\alpha_q \cdot {C}_\theta(x)
\end{align}
This ensures that 
\begin{equation}
\label{eq:conformal}
\Pr[y \in \scriptE_{LMVE}(x)] \ge \eta
\end{equation}
as required.

\section{Experiments}

\begin{table*}[t]
\centering
\caption{Results over 90\% calibration.}
\begin{tabular}{ccccc} 
 \hline
  & GE & NLE & LMVE & Copula \\ 
 \hline
 ble\_rssi & $89.8$ ($\pm 4.8$) & $90.1$ ($\pm 3.1$) & $89.6$ ($\pm 4.3$) & $88.8$ ($\pm 3.7$) \\
 enb & $88.2$ ($\pm 5.2$) & $89.8$ ($\pm 4.2$) & $89.3$ ($\pm 4.8$) & $88.9$ ($\pm 5.4$) \\
 indoor\_localization & $89.8$ ($\pm 1.0$) & $89.7$ ($\pm 1.2$) & $89.6$ ($\pm 1.0$) & $89.8$ ($\pm 1.0$) \\
 residential\_building & $88.6$ ($\pm 7.0$) & $88.5$ ($\pm 6.5$) & $88.7$ ($\pm 7.6$) & $83.9$ ($\pm 6.6$) \\
 \hline
 \hline
 ble\_rssi & $10.2$ ($\pm 1.5$) & $9.6$ ($\pm 1.0$) & \boldmath $7.8$ ($\pm 1.8$) & $54.8$ ($\pm 18.3$) \\
 enb & $34.0$ ($\pm 7.6$) & $18.3$ ($\pm 3.4$) & \boldmath $17.90$ ($\pm 1.1$) & $263.9$ ($\pm 118.4$) \\
 indoor\_localization & $7.0 \cdot 10^{3}$ ($\pm 178.6$) & $4.3 \cdot 10^{3}$ ($\pm 136.5$) & \boldmath $2.0 \cdot 10^{3}$ ($\pm 51.6$) & $22.4 \cdot 10^{3}$ ($\pm 1355$) \\
 residential\_building & $2.2 \cdot 10^{5}$ ($\pm 15.4 \cdot 10^{4}$) & $1.3 \cdot 10^{5}$ ($\pm 7.8 \cdot 10^{4}$) & \boldmath $1.1\cdot 10^{5}$ ($\pm 6.8 \cdot 10^{4}$) & $69.9 \cdot 10^{5}$ ($\pm 6603 \cdot 10^{4}$) \\
 \hline
\end{tabular}
\label{table: comparing 90}
\end{table*}

In this section, we compare the algorithms on different real-world datasets.

Algorithms: In all the methods, the centers of the ellipsoids $\mu$ are computed identically using a multi-output Support Vector Regressor (SVR) fitted on the training set. We compare three algorithms for generating ellipsoids. The shapes $C(x)$ are fitted on the training set and then rescaled to satisfy the coverage constraint on the validation set as defined in \ref{clibrLMVE}. The algorithms are:
\begin{itemize}
    \item Gaussian Ellipsoid (GE): the sample version of (\ref{gaussian}) defined as
    \begin{equation}
        \hat{C}_{\rm GE}(x) = \frac1{T} \sum_{i=1}^T (y_i - \mu_i) (y_i - \mu_i)^T
    \end{equation}
    We approximate the conditional mean using SVR for consistency with the other algorithms.
    \item NLE \cite{messoudi2022ellipsoidal}: local sample covariance matrices of the $5\%$ nearest neighbors $N(x)$ of each $x$ with a small GE regularization
    \begin{multline}
        \hat{C}_{\rm NLE}(x) = 0.95 \cdot \frac1{|N(x)|} \sum_{i \in N(x)} (y_i - \mu_i) (y_i - \mu_i)^T \\+ 0.05 \cdot \hat{C}_{\rm GE}
    \end{multline}
    \item Copula \cite{messoudi2021copula}: We also compare to the empirical copula method. It provides $\alpha_s$ that promises $\Pr\[\frac{|y_i - \hat{y}_i|}{\sigma_i} \le \alpha_s^{(i)}\,\, \forall i \in [d]\] \ge \eta$ for a new sample $y$. We used the same net structure that was suggested in the article. 
    \item LMVE: $C_{LMVE}(x)$ is the neural network output where hyperparameters are chosen to maximize performance on the validation set. Its initialization is based on the NLE baseline. 

\end{itemize}

Datasets: We compare four real-world localization datasets, as detailed in Table \ref{table: datasets}. The full data process, which explains the dimension difference, can be found on GitHub.

\begin{table}[h!]
    \centering
    \caption{Datasets}
    \begin{tabular}{ccccc}
        \hline
        Name & Size & Input & Output & Source \\
        \hline
        ble\_rssi & 1,420 & 13 & 2 & \cite{mohammadi2017semisupervised} \\
        enb & 768 & 8 & 2 & \cite{tsoumakas2011mulan} \\
        indoor\_localization & 19,937 & 519 & 2 & \cite{torres2014ujiindoorloc} \\
        residential\_building & 372 & 103 & 2 & \cite{rafiei2016novel} \\
        \hline
    \end{tabular}
    \label{table: datasets}
\end{table}

Each dataset was divided into three parts: a training set (81\%), a validation set (9\%), and a test set (10\%). We conducted 50 experiments and reported the average performance and standard deviation.

Results for $90\%$ coverage are in Table \ref{table: comparing 90}. The top section shows preferred coverage values close to 90\%. The bottom section displays mean volume values, with smaller values indicating better performance. Strong calibration makes the methods accurate, but LMVE improves average volume. GE is the cheapest for storage and computation. NLE is expensive for large datasets. LMVE is efficient with a simple three-layer neural network. Further comparison is in Table \ref{table: complexity}.

\begin{table}[h!]
    \centering
    \caption{Computational complexity in $\scriptO$ notion. $N$ is the number of neighbors in the NLE. $L$ is the hidden layer size.}
    \begin{tabular}{cccc}
        \hline
        & Memory & Inference Time \\
        \hline
        GE & $d^2$ & const \\
        NLE & $m_t (n + d)$ & $ N \cdot (d \log(m_t) + n^2)$ \\
        LMVE & $d^2 + n^2$ & $d^2 + n^2$ \\
        Copula & $dL + L^2 + Ln$ & $dL + L^2 + Ln$ \\
        \hline
    \end{tabular}
    \label{table: complexity}
\end{table}

We tested more complex losses, such as hinge loss and robust generalized Gaussian-based losses, that smooth the probability function. However, they did not lead to significant improvements in some datasets and were therefore omitted.

\begin{table}[t]
\centering
\caption{Results over 90\% calibration on ENB \cite{tsoumakas2011mulan} dataset. The results are compared to accuracy of $89.325$ ($\pm 4.883$) and expected volume of $17.90$ ($\pm 1.184$).}
\label{table: ablation enb}
\begin{tabular}{cc} 
 \hline
  & LMVE \\ 
 \hline
 Without smart init & $88.3$ ($\pm 5.1$) \\ 
 Fixed $\lambda=1$ & $88.8$ ($\pm 4.7$) \\ 
 2 Layers & $87.7$ ($\pm 4.7$) \\ 
 50\% Validation & $89$ ($\pm 4.1$) \\ 
 \hline
 \hline
 Without smart init & $18.23$ ($\pm 51.5$) \\
 Fixed $\lambda=1$ & $11.40$ ($\pm 4.3$) \\
 2 Layers & $16.58$ ($\pm 7.7$) \\
 50\% Validation & $17.37$ ($\pm 10.5$) \\
 \hline
\end{tabular}
\end{table}

We conducted ablation studies on a few datasets to comprehend the contribution of each component to the overall system. However, we only report some results in Table \ref{table: ablation enb} due to space limitations. The key conclusions are as follows: Smart initialization enhances the performance. Reducing the number of layers in architecture leads to a decline in coverage. The effect of lambda value is less significant and depends on the dataset. Finally, as expected, using a larger validation set improves the coverage but results in inaccuracies in the volume.

\section{Discussion}
This paper proposes a scalable deep learning framework called the LMVE neural network. This framework outputs uncertainty ellipsoids around predictions. LMVE combines CELLO and NLE and improves upon them by providing modern and efficient data modeling.

LMVE is an important deep-learning technique for estimating uncertainty ellipsoids, but many questions remain regarding its usage. Regarding problem formulation, LMVE only considers the average coverage of random labels. Extending its application to confidence regions around deterministic unknown labels would be interesting. Training LMVE is challenging, so future work should focus on improving initialization methods, optimization algorithms, and hyperparameter tuning approaches.

\section*{Appendix: Technical details on LMVE}
We fitted the models in PyTorch using dropout parameters of $0.1$ or $0.5$, and $200,000$ iterations of an Adam optimizer (half for initialization and half for training) with rates $\{ 10^{-3}, 10^{-5} \}$.

\section*{Acknowledge}
We want to thank Erez Peterfreund, Roy Friedman, and Tzvi Diskin for the consulting.

\bibliographystyle{IEEEtran}
\bibliography{main}

\end{document}